\newtheorem{theorem}{Theorem}
\newtheorem{lemma}{Lemma}
\newcommand{\BlackBox}{\rule{1.5ex}{1.5ex}}  
\newenvironment{proof}{\par\noindent{\bf Proof\ }}{\hfill\BlackBox}
\title{Self-correcting Q-Learning}
\author {
        Rong Zhu\textsuperscript{\rm 1}\thanks{Part of the work was done while at Columbia University\newline},
        Mattia Rigotti \textsuperscript{\rm 2}\\
}
\begin{document}

\maketitle

\begin{abstract}
The Q-learning algorithm is known to be affected by the \emph{maximization bias}, i.e.\ the systematic overestimation of action values, an important issue that has recently received renewed attention. Double Q-learning has been proposed as an efficient algorithm to mitigate this bias. However, this comes at the price of an \emph{underestimation} of action values, in addition to increased memory requirements and a slower convergence.
In this paper, we introduce a new way to address the maximization bias in the form of a ``self-correcting algorithm'' for approximating the maximum of an expected value. Our method balances the overestimation of the single estimator used in conventional Q-learning and the underestimation of the double estimator used in Double Q-learning. Applying this strategy to Q-learning results in \emph{Self-correcting Q-learning}. We show theoretically that this new algorithm enjoys the same convergence guarantees as Q-learning while being more accurate. Empirically, it performs better than Double Q-learning in domains with rewards of high variance, and it even attains faster convergence than Q-learning in domains with rewards of zero or low variance. These advantages transfer to a Deep Q Network implementation that we call \emph{Self-correcting DQN} and which outperforms regular DQN and Double DQN on several tasks in the Atari 2600 domain.
\end{abstract}

\section{Introduction}
The goal of Reinforcement Learning (RL) is to learn to map situations to actions so as to maximize a cumulative future reward signal \citep{Sutton:2018}. 
Q-learning proposed by \citet{Watkins:89} is one of the most popular algorithms for solving this problem, and does so by estimating the optimal action value function. 
The convergence of Q-learning has been proven theoretically for discounted Markov Decision Processes (MDPs), and for undiscounted MDPs with the condition that all policies lead to a zero-cost absorbing state \cite{WD:92,Tsitsiklis:94,JJS:94}.
Q-learning has been widely successfully deployed on numerous practical RL problems in fields including control, robotics \cite{Kober13} and human-level game play \cite{Minh:15}.

However, Q-learning is known to incur a \emph{maximization bias}, i.e., the overestimation of the maximum expected action value, which can result in poor performance in MDPs with stochastic rewards.
This issue was first pointed out by \citet{TS:93}, and further investigated by \citet{Hasselt:2010} that proposed Double Q-learning as a way of mitigating the problem by using the so-called double estimator, consisting in two separately updated action value functions. 
By decoupling action selection and reward estimation with these two estimators, Double Q-learning addresses the overestimation problem, but at the cost of introducing a systematic underestimation of action values.
In addition, when rewards have zero or low variances, Double Q-learning displays slower convergence than Q-learning due to its alternation between updating two action value functions.

The main contribution of this paper is the introduction of a new \emph{self-correcting estimator} to estimate the maximum expected value.
Our method is based on the observation that in Q-learning successive updates of the Q-function at time steps $n$ and $n-1$ are correlated estimators of the optimal action value
that can be combined into a new ``self-correcting'' estimator, once the resulting combination is maximized over actions.
Crucially, using one value function (at different time steps) allows us to avoid having to update two action value functions.
First, we will show how to combine correlated estimators to obtain a self-correcting estimator that is guaranteed to balance the overestimation of the single estimator and the underestimation of the double estimator.
We then show that, if appropriately tuned,
this strategy can completely remove the maximization bias, which is particularly pernicious in domains with high reward variances. Moreover, it can also attain faster convergence speed than Q-learning in domains where rewards are deterministic or with low variability.
Importantly, Self-correcting Q-learning does not add any additional computational or memory costs compared to Q-learning.
Finally, we propose a Deep Q Network version of Self-correcting Q-learning that we successfully test on the Atari 2600 domain. 

\paragraph{Related work.}

Beside Double Q-learning, other methods have been proposed to reduce the maximization bias, e.g., 
removing the asymptotic bias of the max-operator under a Gaussian assumption \cite{LDP:13}, estimating the maximum expected value by Gaussian approximation \cite{DNR:16}, averaging Q-values estimates \cite{ABS:17}, and softmax Bellman operator \cite{Song:19}, clipping values in an actor-critic setting \cite{Dorka:19}.
Fitted Q-iteration \cite{EGW:05}, Speedy Q-learning \cite{Azar:11}, and Delayed Q-learning \cite{SLWLL:06} are related variations of Q-learning with faster convergence rate.
In its basic form, our approach is a generalization of regular Q-learning. But importantly it can be applied to any other variant of Q-learning to reduce its maximization bias. Finally, DeepMellow \cite{Kim:19} establishes that using a target network in DQN \cite{Minh:15} also reduces maximization bias, and proposes a soft maximization operator as an alternative.

\paragraph{Paper organization.}
In Section \ref{sec:background}, we  review MDPs and Q-learning. 
In Section \ref{sec:bias}, we consider the problem of estimating the maximum expected value of a set of random variables, and review the maximization bias, the single estimator and the double estimator.
Then we propose the self-correcting estimator, and show that this estimator can avoid both the overestimation of the single estimator and the underestimation of the double estimator. 
In Section \ref{sec:SCQ} we apply the self-correcting estimator to Q-learning and propose Self-correcting Q-learning which converges to the optimal solution in the limit.
In Section \ref{sec:deep} we implement a Deep Neural Network version of Self-correcting Q-learning. 
In Section \ref{sec:task} we show the results of several experiments empirically examining these algorithms. 
Section \ref{sec:end} concludes the paper with consideration on future directions.

\section{Markov Decision Problems and Q-learning}
\label{sec:background}

We will start recalling the application of Q-learning to solve MDPs.
Let $Q_n(s,a)$ denote the estimated value of action $a$ in state $s$ at time step $n$.
An MDP is defined such that the next state $s^{\prime}$ is determined by a fixed state transition distribution $P: S\times A\times S \rightarrow [0,1]$, 
where $P_{ss^{\prime}}(a)$ gives the probability of ending up in state $s^{\prime}$ after performing action $a$ in $s$, and satisfies $\sum\limits_{s^{\prime}}P_{ss^{\prime}}(a)=1$.
The reward $r$ is drawn from a reward distribution with $\text{E}(r|s,a,s^{\prime})=R_{ss^{\prime}}(a)$ for a given reward function $R_{ss^{\prime}}(a)$.
The optimal value function $Q^*(s,a)$ is the solution to the so-called \emph{Bellman equations}:
$Q^*(s,a)=\sum\limits_{s^{\prime}}P_{ss^{\prime}}(a)\left[R_{ss^{\prime}}(a)+\gamma \max_a Q^*(s^{\prime},a)\right] \forall s,a$,
where $\gamma\in [0,1)$ is the discount factor.
As a solution of an MDPs, \cite{Watkins:89} proposed Q-learning, which consists in the following recursive update:
\begin{align}
\label{eq:q_update}
Q_{n+1}(s,a)=&Q_n(s,a)\\
&+\alpha(s,a)
\left[r+\gamma \max_a Q_n(s^{\prime},a)-Q_n(s,a)\right].\notag
\end{align}
Notice that in this expression the $\max$ operator is used to estimate the value of the next state.
Recently, \cite{Hasselt:2010} showed that this use of the maximum value as an approximation for the maximum expected value introduces a \emph{maximization bias}, which results in Q-learning over-estimating action values.

\section{Maximum expected value estimation}
\label{sec:bias}

\paragraph{The single estimator.}

We begin by looking at estimating the maximum expected value.  
Consider a set of $M$ random variables $\{Q(a_i)\}_{i=1}^M$.
We are interested in estimating their maximum expected value, $\max_i\text{E}[Q(a_i)]$.
Clearly however, it is infeasible to know $\max_i\text{E}[Q(a_i)]$ exactly in absence of any assumption on their underlying distributions.
One natural way to approximate the maximum expected value is through the maximum $\max_i\{Q(a_i)\}$, which is called the \emph{single estimator}.
As noticed, Q-learning uses this method to approximate the value of the next state by maximizing over the estimated action values.
Although $Q(a_i)$ is an unbiased estimator of $\text{E}[Q(a_i)]$, from Jensen's inequality we get that
$\text{E}[\max_i\{Q(a_i)\}]\geq \max_i\text{E}[Q(a_i)]$,
meaning that the single estimator is positively biased. 
This is the so-called \emph{maximization bias}, which interestingly has also been investigated in fields outside of RL, such as economics \cite{Van:04}, decision making \cite{SW:06}, and auctions \cite{Thaler:88}.

\paragraph{The double estimator.}

The paper \cite{Hasselt:2010} proposed to address the maximization bias by introducing the \emph{double estimator}.
Assume that there are two independent, unbiased sets of estimators of $\{\text{E}[Q(a_i)]\}_{i=1}^M$: $\{Q^A(a_i)\}_{i=1}^M$ and $\{Q^B(a_i)\}_{i=1}^M$.
Let $a_D^*$ be the action that maximizes $\{Q^A(a_i)\}_{i=1}^M$, that is, $Q^A(a_D^*)=\max_i\{Q^A(a_i)\}$. 
The double estimator uses $Q^B(a_D^*)$ as an estimator for $\max_i\text{E}[Q(a_i)]$.  
This estimator is unbiased in the sense that $\text{E}[Q^B(a_D^*)]=\text{E}[Q(a_D^*)]$ due to the independence of $\{Q^A(a_i)\}_{i=1}^M$ and $\{Q^B(a_i)\}_{i=1}^M$. 
However, this estimator has a tendency towards underestimation, i.e., 
$\text{E}[Q^B(a_D^*)]\leq \max_i\text{E}[Q^B(a_i)]$
(see details in \citealt{Hasselt:2010}).

\paragraph{The self-correcting estimator.}

Let us now consider two independent and unbiased sets of estimators of $\{\text{E}[Q(a_i)]\}_{i=1}^M$, given by $\{Q^{B_1}(a_i)\}_{i=1}^M$ and $\{Q^{B_{\tau}}(a_i)\}_{i=1}^M$. 
From them, we construct another unbiased set of estimators $\{Q^{B_0}(a_i)\}_{i=1}^M$ by defining
\begin{equation}\label{QeD}
    Q^{B_0}(a_i)=\tau Q^{B_1}(a_i)+(1-\tau) Q^{B_{\tau}}(a_i),
\end{equation}
where $\tau\in[0,1)$  
denotes the degree of dependence between $Q^{B_0}(a_i)$ and $Q^{B_1}(a_i)$.
Eqn.~\eqref{QeD} clearly establishes that $Q^{B_0}(a_i)$ and $Q^{B_1}(a_i)$ are non-negatively correlated, unbiased estimators of $\text{E}[Q(a_i)]$. 
Let $\sigma_1^2$ and $\sigma_{\tau}^2$ be the variances of $Q^{B_1}(a_i)$ and $Q^{B_{\tau}}(a_i)$, respectively.
The Pearson correlation coefficient between $Q^{B_0}(a_i)$ and $Q^{B_1}(a_i)$ is 
$\rho=\tau\sigma_1/\sqrt{\tau^2\sigma_1^2+(1-\tau)^2\sigma_{\tau}^2}$.
When $\tau\rightarrow 1$, $Q^{B_0}(a_i)$ is completely correlated with $Q^{B_1}(a_i)$. 
While as $\tau$ becomes smaller, the correlation is weaker. 

Denoting $\beta=1/(1-\tau)$,
we rewrite Eqn.~\eqref{QeD} as
\begin{equation}
\label{Qe}
    Q^{B_{\tau}}(a_i)=Q^{B_1}(a_i)-\beta [Q^{B_1}(a_i)-Q^{B_0}(a_i)].
\end{equation}
Let $a_{\tau}^*$ indicate the action maximizing $\{Q^{B_{\tau}}(a_i)\}_{i=1}^M$, i.e.\ 
$a_{\tau}^*=\arg\max_{a_i} Q^{B_{\tau}}(a_i)$.
We call $Q^{B_0}(a_{\tau}^*)$ \emph{self-correcting estimator} of $\text{E}[Q(a_i)]$ because in a Q-learning setting the roles of $Q^{B_0}(a_i)$ and $Q^{B_1}(a_i)$ are going to be taken up by sequential terms of Q-learning updates (see next section).

\begin{lemma}\label{lemma-compare}
Consider a set of $M$ random variables $\{Q(a_i)\}_{i=1}^M$ with the expected values $\{\text{E}[Q(a_i)]\}_{i=1}^M$. 
Let $\{Q^{B_0}(a_i)\}_{i=1}^M$, $\{Q^{B_1}(a_i)\}_{i=1}^M$, and $\{Q^{B_{\tau}}(a_i)\}_{i=1}^M$ be unbiased sets of estimators satisfying the relation Eqn.~\eqref{QeD}, and $a_{\tau}^*$ the action that maximizes $\{Q^{B_{\tau}}(a_i)\}_{i=1}^M$. 
Assume that $\{Q^{B_{\tau}}(a_i)\}_{i=1}^M$ are independent from $\{Q^{B_1}(a_i)\}_{i=1}^M$.
Then 
\begin{equation*}
    E[Q^{B_1}(a_{\tau}^*)]\leq E[Q^{B_0}(a_{\tau}^*)]\leq \text{E}[\max_i Q^{B_{\tau}}(a_i)].
\end{equation*}
Furthermore, there exists a $\beta$ such that $E[Q^{B_0}(a_{\tau}^*)]=\max_i\text{E}[Q(a_i)]$.
\end{lemma}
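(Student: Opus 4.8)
The plan is to exploit unbiasedness together with the convex-combination structure of Eqn.~\eqref{QeD}. Writing $\mu_i=\text{E}[Q(a_i)]$, every estimator has mean $\mu_i$, so taking expectations in \eqref{QeD} evaluated at the (random) maximizer $a_\tau^*$ gives $\text{E}[Q^{B_0}(a_\tau^*)]=\tau\,\text{E}[Q^{B_1}(a_\tau^*)]+(1-\tau)\,\text{E}[Q^{B_\tau}(a_\tau^*)]$. Since $a_\tau^*$ maximizes $\{Q^{B_\tau}(a_i)\}$, the last term equals $\text{E}[\max_i Q^{B_\tau}(a_i)]$. Thus the self-correcting estimator's mean is a convex combination (weights $\tau,1-\tau\in[0,1]$) of the ``double-type'' term $A:=\text{E}[Q^{B_1}(a_\tau^*)]$ and the ``single-type'' term $B:=\text{E}[\max_i Q^{B_\tau}(a_i)]$.

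The key step is to sandwich these two endpoints around $\max_i\mu_i$. For the upper endpoint, Jensen's inequality gives $B=\text{E}[\max_i Q^{B_\tau}(a_i)]\ge\max_i\text{E}[Q^{B_\tau}(a_i)]=\max_i\mu_i$ (the overestimation of the single estimator). For the lower endpoint, I would use the independence hypothesis: because $a_\tau^*$ is a function of $\{Q^{B_\tau}(a_i)\}$ alone and is therefore independent of $\{Q^{B_1}(a_i)\}$, conditioning on $a_\tau^*$ yields $A=\sum_i\mu_i\,\Pr(a_\tau^*=a_i)\le\max_i\mu_i$ (the underestimation of the double estimator). In particular $A\le B$, so the convex combination lies in $[A,B]$, i.e.\ $A\le\text{E}[Q^{B_0}(a_\tau^*)]\le B$, which is precisely the claimed chain.

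For the existence statement I would regard $Q^{B_1}$ and $Q^{B_\tau}$ as fixed independent base estimators and treat $\tau$ (equivalently $\beta=1/(1-\tau)$) as the free knob combining them. Then $a_\tau^*$, and hence $A$ and $B$, are constants, and $f(\tau):=\text{E}[Q^{B_0}(a_\tau^*)]=B-\tau(B-A)$ is affine in $\tau$, decreasing from $f(0)=B\ge\max_i\mu_i$ toward $A\le\max_i\mu_i$ as $\tau\uparrow 1$. By the intermediate value theorem there is a $\tau^*\in[0,1)$ with $f(\tau^*)=\max_i\mu_i$; solving the linear equation gives $\tau^*=(B-\max_i\mu_i)/(B-A)$ and $\beta^*=1/(1-\tau^*)$.

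The main obstacle is the boundary behavior of this last step. I expect the only real care to be needed in handling the degenerate situation $A=B$: one checks that $A=\max_i\mu_i$ forces $a_\tau^*$ to select a true maximizer almost surely, whence $\max_i Q^{B_\tau}(a_i)=Q^{B_\tau}(a_\tau^*)$ a.s.\ and $B=\max_i\mu_i$ as well, so any $\beta$ works; otherwise $A<\max_i\mu_i<B$ and $\tau^*\in(0,1)$ is the unique solution. A secondary point to state carefully is that the lower bound $A\le\max_i\mu_i$ genuinely requires the independence assumption: without it the selection of $a_\tau^*$ could be correlated with $Q^{B_1}$ and the double-estimator inequality need not hold.
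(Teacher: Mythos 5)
Your proposal is correct and follows essentially the same route as the paper's proof: decompose $\text{E}[Q^{B_0}(a_\tau^*)]$ as the convex combination $\tau\,\text{E}[Q^{B_1}(a_\tau^*)]+(1-\tau)\,\text{E}[\max_i Q^{B_\tau}(a_i)]$, bound the second term from below by $\max_i\text{E}[Q(a_i)]$ via Jensen (single-estimator overestimation) and the first from above via independence and conditioning on $a_\tau^*$ (double-estimator underestimation), then invoke the intermediate value theorem in $\tau$. You are in fact somewhat more careful than the paper, which leaves the affine-in-$\tau$ argument and the degenerate endpoint cases implicit.
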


\begin{proof}
The proof is provided in the Appendix.
\end{proof}

Notice that, under the assumption that $\{Q^{B_{\tau}}(a_i)\}_i$ are independent from $\{Q^{B_1}(a_i)\}_i$, by construction $Q^{B_1}(a_{\tau}^*)$ is a \emph{double estimator} of $\max_i\text{E}[Q(a_i)]$.
Lemma \ref{lemma-compare} then establishes that the bias of $Q^{B_0}(a_{\tau}^*)$ is always  between the positive bias of the single estimator $\max_i Q^{B_{\tau}}(a_i)$ and the negative bias of the double estimator $Q^{B_1}(a_{\tau}^*)$. In other words, $Q^{B_0}(a_{\tau}^*)$ is guaranteed to balance the overestimation of the single estimator and the underestimation of the double estimator. 
Therefore, the self-correcting estimator can reduce the maximization bias, and even completely remove it if the parameter $\beta$ is set appropriately.

Let us denote with $\beta^*$ such an ideal (and in general unknown) parameter for which the self-correcting estimator is unbiased. 
A value $\beta^*\rightarrow 1$ indicates that no bias needs to be removed from $Q^{B_0}(a_{\tau}^*)$.
While as $\beta^*$ becomes larger, progressively more bias has to be removed.
Thus, $\beta^*$ can be seen as a measure of the severity of the maximization bias, weighting how much bias should be removed. 
As remarked, it is in practice impossible to know the ideal $\beta^*$.
But these observations tell us that larger biases will have to be corrected by choosing correspondingly larger values of $\beta$.

\section{Self-correcting Q-learning}
\label{sec:SCQ}

In this section we apply the self-correcting estimator to Q-learning, and propose a novel method to address its maximization bias. 
We consider sequential updates of the action value function, $Q_{n}(s^{\prime},a)$ and $Q_{n+1}(s^{\prime},a)$, as candidates for the correlated estimators $Q^{B_0}(a_i)$ and $Q^{B_1}(a_i)$ in Lemma \ref{lemma-compare},  
despite the relationship between $Q_{n}(s^{\prime},a)$ and $Q_{n+1}(s^{\prime},a)$ in Q-learning being seemingly more complicated than that defined in Eq.~\eqref{QeD}. 
Replacing $Q^{B_0}(a_i)$ and $Q^{B_1}(a_i)$ in Eq.~\eqref{Qe} with $Q_{n}(s^{\prime},a)$ and $Q_{n+1}(s^{\prime},a)$ gives
\begin{equation*}
    Q_{n+1}^{\beta}(s^{\prime},a)=Q_{n+1}(s^{\prime}, a)-\beta[Q_{n+1}(s^{\prime}, a)-Q_{n}(s^{\prime}, a)].
\label{eq:scq0}
\end{equation*}
Let $a_{\tau}^*$ be the action that maximizes $Q_{n+1}^{\beta}(s^{\prime},a)$ over $a$. 
Following Lemma \ref{lemma-compare}, $Q_{n}(s^{\prime},a_{\tau}^*)$ balances the overestimation of the single estimator and the underestimation of the double estimator, and moreover there exists an optimal value of $\beta$ for which it is unbiased.
However, $Q_{n+1}(s^{\prime},a)$ is not available at time step $n$. 
To address this issue, we construct an alternative Q-function by replacing the sequential updates $Q_{n}(s^{\prime},a)$ and $Q_{n+1}(s^{\prime},a)$ with the sequential updates $Q_{n-1}(s^{\prime},a)$ and $Q_{n}(s^{\prime},a)$ at the previous update step. Specifically, we define the following Q-function:
\begin{equation}
    Q_n^{\beta}(s^{\prime},a)=Q_n(s^{\prime}, a)-\beta[Q_{n}(s^{\prime}, a)-Q_{n-1}(s^{\prime}, a)],
\label{eq:scq}
\end{equation}
where $\beta\geq 1$ is a constant parameter tuning the bias correction.
Therefore, we propose to use $Q_n^{\beta}(s^{\prime},a)$ for action selection according to 
$\hat{a}_{\beta}=\arg\max_aQ_n^{\beta}(s^{\prime},a)$, and to use $Q_n(s^{\prime},\hat{a}_{\beta})$ to estimate the value of the next step.
This results in the following \emph{self-correcting estimator} approximating the maximum expected value:
$Q_n(s^{\prime},\hat{a}_{\beta})\approx \max_a\text{E}[Q_n(s^{\prime},a)]$.
Thus, we propose to replace Eqn.\ \eqref{eq:q_update} from Q-learning with the following novel updating scheme:
\begin{align}\label{update}
Q_{n+1}(s,a)=&Q_n(s,a)\\
&+\alpha_n(s,a)\left[r+\gamma Q_n\left(s^{\prime},\hat{a}_{\beta}\right)-Q_n(s,a)\right].\notag
\end{align}
We call this \emph{Self-correcting Q-learning}, and summarize the method in Algorithm \ref{alg:m-Q}.

\begin{algorithm}
\textbf{Parameters:} step size $\alpha\in (0,1]$, discount factor $\gamma \in (0,1]$, small $\epsilon>0$, and $\beta\geq 1$. \\ 
\textbf{Initialize} $Q_0(s,a)=0$ for all $a\in A$, and $s$ terminal\\
\textbf{Loop for each episode:}\\
(1) Initialize $s$\\
(2) Loop for each step of episode:\\
(2.a) Choose $a$ from $s$ using the policy $\epsilon$-greedy in $Q$\\
(2.b) Take action $a$, observe $r$, $s^{\prime}$, update $Q_{n}(s,a)$:
\begin{align*}
&\ \ Q_n^{\beta}(s^{\prime},a)=Q_n(s^{\prime}, a)-\beta[Q_n(s^{\prime}, a)-Q_{n-1}(s^{\prime}, a)]\notag\\
&\ \ \hat{\alpha}_{\beta}=\arg\max_a Q_n^{\beta}(s^{\prime},a)\notag\\
&\ \ Q_{n+1}(s,a)=Q_n(s,a)\notag\\
&\ \ \ \ \ \ \ \ \ \ \ \ \ \ \ \ \ \ \ \ \ \ \ +\alpha_n(s,a)\left[r+\gamma Q_n\left(s^{\prime},\hat{\alpha}_{\beta}\right)-Q_n(s,a)\right]\notag\\
&\ \ s\leftarrow s^{\prime}\notag
\end{align*}
(3) until $s$ is terminal.
\caption{Self-correcting Q-learning. 
}
\label{alg:m-Q}
\end{algorithm}

\paragraph{Remarks.}
$Q_n^{\beta}(s^{\prime},a)=Q_{n-1}(s^{\prime}, a)$ if $\beta=1$.
In this case, the algorithm uses $Q_{n-1}$ from the previous time step instead of $Q_{n}$ to select the action.
This is different from Double Q-learning which trains two Q-functions, but is reminiscent of using a ``delayed'' target network \citep{Minh:15}.

We now prove that asymptotically Self-correcting Q-learning converges to the optimal policy.
Comparing Eqns.\ \eqref{eq:q_update} and \eqref{update}, we see that the difference between Self-correcting Q-learning and regular Q-learning is due to $Q_n^{\beta}(s^{\prime},a)$ being different from $Q_n(s^{\prime},a)$.
As the gap between $Q_n(s^{\prime}, a)$ and $Q_{n-1}(s^{\prime}, a)$ becomes smaller, less bias is self-corrected.
This suggests that the convergence of Self-correcting Q-learning for $n\rightarrow \infty$ can be proven with similar techniques as Q-learning.

We formalize this intuition in a theoretical result that follows the proof ideas used by \cite{Tsitsiklis:94} and \cite{JJS:94} to prove the convergence of Q-learning, which are in turn built on the convergence property of stochastic dynamic approximations.
Specifically, Theorem \ref{them-SCQ} below claims that the convergence of Self-correcting Q-learning holds under the same conditions as the convergence of Q-learning. The proof is in the Appendix.

\begin{theorem}
\label{them-SCQ}
If the following conditions are satisfied:
(C1) The MDP is finite, that is, the state and action spaces are finite;
(C2) $\alpha_n(s,a)\in (0,1]$ is such that  $\sum_{n=1}^{\infty}\alpha_n(s,a)=\infty$ and $\sum_{n=1}^{\infty}[\alpha_n(s,a)]^2<\infty$, $\forall s,a$;
(C3) $\text{Var}(r)<\infty$;
(C4) $1\leq\beta<\infty$;
then $Q_n(s,a)$ as updated by Self-correcting Q-learning (Algorithm \ref{alg:m-Q}),
will converge to the optimal value $Q^*$ defined by the Bellman optimality given by the \emph{Bellman equations} with probability one.
\end{theorem}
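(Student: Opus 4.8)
The plan is to cast the update \eqref{update} into the canonical stochastic-approximation form and invoke the convergence lemma for such processes used by \cite{Tsitsiklis:94} and \cite{JJS:94}. Writing $\Delta_n(s,a)=Q_n(s,a)-Q^*(s,a)$ and subtracting $Q^*(s,a)$ from both sides of \eqref{update}, I obtain
\begin{equation*}
\Delta_{n+1}(s,a)=(1-\alpha_n(s,a))\Delta_n(s,a)+\alpha_n(s,a)\,F_n(s,a),
\end{equation*}
with $F_n(s,a)=r+\gamma Q_n(s',\hat{a}_{\beta})-Q^*(s,a)$. The lemma then guarantees $\Delta_n\to 0$ with probability one provided: the step-size condition (C2) holds; the conditional variance is bounded, $\text{Var}[F_n\mid\mathcal{F}_n]\leq C(1+\|\Delta_n\|_\infty)^2$; and the conditional mean contracts, $|\text{E}[F_n\mid\mathcal{F}_n]|\leq \gamma\|\Delta_n\|_\infty+c_n$ with $c_n\to 0$ almost surely. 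Here $\|\cdot\|_\infty$ is the max-norm over state–action pairs and $\mathcal{F}_n$ the history up to step $n$. The variance bound is routine: the randomness in $F_n$ comes from the reward (finite variance by (C3)) and the transition (finitely supported by (C1)), so it is controlled exactly as for Q-learning.

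The crux is the contraction of the conditional mean, where the self-correcting action selection departs from Q-learning. Using the Bellman equations I would write $\text{E}[F_n\mid\mathcal{F}_n]=\gamma\sum_{s'}P_{ss'}(a)\big[Q_n(s',\hat{a}_{\beta})-\max_a Q^*(s',a)\big]$ and bound the bracket both ways. For the upper bound, $Q_n(s',\hat{a}_{\beta})\leq\max_a Q_n(s',a)$ yields $Q_n(s',\hat{a}_{\beta})-\max_a Q^*(s',a)\leq\|\Delta_n\|_\infty$. For the lower bound I exploit that $\hat{a}_{\beta}$ maximizes $Q_n^{\beta}$: setting $g_n(s',a)=\beta[Q_n(s',a)-Q_{n-1}(s',a)]$ and $a^{\star}=\arg\max_a Q^*(s',a)$, the inequality $Q_n^{\beta}(s',\hat{a}_{\beta})\geq Q_n^{\beta}(s',a^{\star})$ rearranges to $Q_n(s',\hat{a}_{\beta})\geq Q_n(s',a^{\star})-2\max_a|g_n(s',a)|$, so that $Q_n(s',\hat{a}_{\beta})-\max_a Q^*(s',a)\geq -\|\Delta_n\|_\infty-2\beta\|Q_n-Q_{n-1}\|_\infty$. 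Combining the two directions gives $|\text{E}[F_n\mid\mathcal{F}_n]|\leq \gamma\|\Delta_n\|_\infty+2\gamma\beta\|Q_n-Q_{n-1}\|_\infty$, so I identify $c_n=2\gamma\beta\|Q_n-Q_{n-1}\|_\infty$.

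It remains to show $c_n\to 0$ almost surely, which I expect to be the main obstacle. Since $\beta<\infty$ by (C4), it suffices that consecutive iterates become arbitrarily close. At any updated pair, $|Q_n(s',a)-Q_{n-1}(s',a)|=\alpha_{n-1}(s',a)\,|r_{n-1}+\gamma Q_{n-1}(s'',\hat{a}_{\beta})-Q_{n-1}(s',a)|$ (and zero otherwise). I would first establish boundedness of the iterates as for Q-learning, after which the temporal-difference increment is bounded in expectation. Then $\alpha_{n-1}\to 0$ from the square-summability of the step sizes, combined with a Borel–Cantelli argument using $\sum_n\alpha_n^2\,\text{E}[r_n^2]<\infty$ (finite since $\text{E}[r_n^2]=\text{Var}(r)+R_{ss'}(a)^2$ is bounded over the finite MDP by (C3) and (C1)), yields $\alpha_{n-1}r_{n-1}\to 0$ and hence $\|Q_n-Q_{n-1}\|_\infty\to 0$ almost surely, so $c_n\to 0$. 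The delicate points are the boundedness of the iterates under a merely finite-variance (not bounded) reward and the careful control of the possibly unbounded reward realization through the square-summable step sizes; once $c_n\to 0$ is secured, the stochastic-approximation lemma delivers convergence under exactly the stated conditions (C1)--(C4).
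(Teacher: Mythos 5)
Your proposal is sound in outline and your key inequality is correct: from $Q_n^{\beta}(s',\hat{a}_{\beta})\geq Q_n^{\beta}(s',a^{\star})$ you correctly extract $|\text{E}[F_n\mid\mathcal{F}_n]|\leq\gamma\|\Delta_n\|_\infty+2\gamma\beta\|Q_n-Q_{n-1}\|_\infty$, which isolates the entire effect of the self-correcting action selection in a single additive term. However, your route differs from the paper's. You keep a single error process $\Delta_n=Q_n-Q^*$ and treat the extra term as a vanishing perturbation $c_n\to 0$, which means you need the \emph{extended} form of the stochastic-approximation lemma (the Singh et al.\ version used to prove Double Q-learning's convergence, with condition $\|\text{E}[F_n\mid P_n]\|_{\max}\leq\zeta\|\Delta_n\|_{\max}+c_n$), not the plain Theorem 1 of Jaakkola et al.\ stated in the paper's appendix, which has no such allowance. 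The paper instead decomposes $\Delta_n=\Delta_n^{\beta}+\beta\Delta_n^{Q}$ with $\Delta_n^{\beta}=Q_n^{\beta}-Q^*$ and $\Delta_n^{Q}=Q_n-Q_{n-1}$, writes two coupled recursions for these, and argues that the coupled pair has the same structure as the pair of processes appearing \emph{inside} Jaakkola et al.'s own proof, so that both components converge to zero simultaneously. What your approach buys is a cleaner, more modular argument with an explicit contraction constant $\gamma$; what it costs is that you must independently establish $\|Q_n-Q_{n-1}\|_\infty\to 0$ almost surely, which in turn requires boundedness of the iterates under merely finite-variance rewards — you correctly flag this as the delicate point, and your square-summability/Borel--Cantelli sketch for it is plausible but not completed. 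The paper sidesteps this by making $\Delta_n^{Q}$ one of the two coupled processes whose convergence is concluded jointly, at the price of a less explicit invocation of the machinery ("apply the same steps"). Neither argument is fully rigorous on this point, so your proposal is at the same level of completeness as the paper's sketch, reached by a genuinely different decomposition.
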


We conclude this section by discussing the parameter $\beta$ in Self-correcting Q-learning. 
In estimating the maximum expected value, Lemma \ref{lemma-compare} shows that $\beta$ relies on the correlation between $Q^{B_0}(a_i)$ and $ Q^{B_1}(a_i)$.
However, the relation between $Q_n(s^{\prime},a)$ and $Q_{n-1}(s^{\prime},a)$ in Q-learning is more complicated than the setting of the Lemma.
Therefore, the significance of Lemma \ref{lemma-compare} lies in the fact that
$Q_n(s^{\prime},a)-Q_{n-1}(s^{\prime},a)$ can be used as ``direction'' for removing the maximization bias in the objective to search policy,
and that 
$\beta$ can be tuned to approximate the premise of the Lemma and match the correlation between the estimators $Q_n(s^{\prime},a)$ and $Q_{n-1}(s^{\prime},a)$, corresponding to $Q^{B_0}(a_i)$ and $Q^{B_1}(a_i)$ of the self-correcting estimator.

As we will show empirically in Section \ref{sec:task}, the correction of the maximization bias is robust to changes in values of $\beta$.
As rule of thumb, we recommend setting $\beta\approx2, 3, \text{or } 4$, keeping in mind that as reward variability increases (which exacerbates the maximization bias), improved performance may be obtained by setting $\beta$ to larger values.

\section{Self-correcting Deep Q-learning}
\label{sec:deep}

Conveniently, Self-correcting Q-learning is amenable to a Deep Neural Network implementation, similarly to how Double Q-learning can be turned into Double DQN \citep{HGS:16}.
This will allow us to apply the idea of the self-correcting estimator in high-dimensional domains, like those that have been recently solved by Deep Q Networks (DQN), the combination of Q-learning with Deep Learning techniques \citep{Minh:15}.
A testbed that has become standard for DQN is the Atari 2600 domain popularized by the ALE Environment \citep{Bellemare2013}, that we'll examine in the Experiments section.

We first quickly review Deep Q Networks (DQN).
A DQN is a multi-layer neural network parameterizing an action value function  $Q(s,a; \theta)$, where $\theta$ are the parameters of the network that are tuned by gradient descent on the Bellman error.
An important ingredient for this learning procedure to be stable is the use
proposed in \cite{Minh:15} of a target network, i.e.\ a network with parameters $\theta^{-}$ (using the notation of \cite{HGS:16}) which are a delayed version of the parameters of the online network.
Our DQN implementation of Self-correcting Q-learning also makes use of such a target network.

Specifically, our proposed \emph{Self-correcting Deep Q Network algorithm} (ScDQN) equates the current and previous estimates of the action value function $Q_n(s^{\prime}, a)$ and $Q_{n-1}(s^{\prime}, a)$ in Eqn.\ \eqref{eq:scq}
to the target network $Q(s,a; \theta^-)$ and the online network $Q(s,a; \theta)$, respectively. In other words, we compute
\begin{equation*}
Q^{\beta}(s^{\prime},a)=Q(s^{\prime}, a; \theta^-)-\beta[Q(s^{\prime}, a; \theta^-)-Q(s^{\prime}, a; \theta)],
\end{equation*}
which, analogously to Algorithm \ref{alg:m-Q}, is used for action selection:
$\hat{a}_{\beta}=\arg\max_a Q^{\beta}(s^{\prime},a)$,
while the target network $Q(s^{\prime}, a; \theta^-)$ is used for action evaluation as in regular DQN.
Everything else also proceeds as in DQN and Double-DQN.

\paragraph{Remarks.}
It's worth drawing a parallel between the relation of ScDQN with Self-correcting Q-learning, and that of regular DQN with Double-DQN.
First, unlike Self-correcting Q-learning which uses $Q_n(s^{\prime}, a)-Q_{n-1}(s^{\prime}, a)$ to correct the maximization bias, ScDQN uses $Q(s^{\prime}, a; \theta^-)-Q(s^{\prime}, a; \theta)$, and therefore takes advantage of the target network introduced in DQN.
This is analogous to Double-DQN performing action selection through the target network, instead of using a second independent Q function like vanilla Double Q-learning.
If memory requirements weren't an issue, a closer implementation to Self-correcting Q-learning would be to define
$Q^{\beta}(s^{\prime},a)=Q(s^{\prime}, a; \theta^-)-\beta[Q(s^{\prime}, a; \theta^-)-Q(s^{\prime}, a; \theta^{=})]$, where  $\theta^{=}$ denotes a set of parameters delayed by a fixed number of steps.
This would be an alternative strategy worth investigating.
Second, ScDQN is implemented such that with $\beta=0$ it equals regular DQN, and with $\beta=1$ it goes to Double-DQN.
The intuition that this provides is that ScDQN with $\beta\geq 1$ removes a bias that is estimated in the direction between DQN and Double-DQN, rather then interpolating between the two.

In the Experiments section we benchmark Self-correcting Q-learning on several classical RL tasks, and ScDQN on a representative set of tasks of the Atari 2600 domain.

\begin{figure}[ht]
    \centering
    \includegraphics[width=0.45\textwidth]{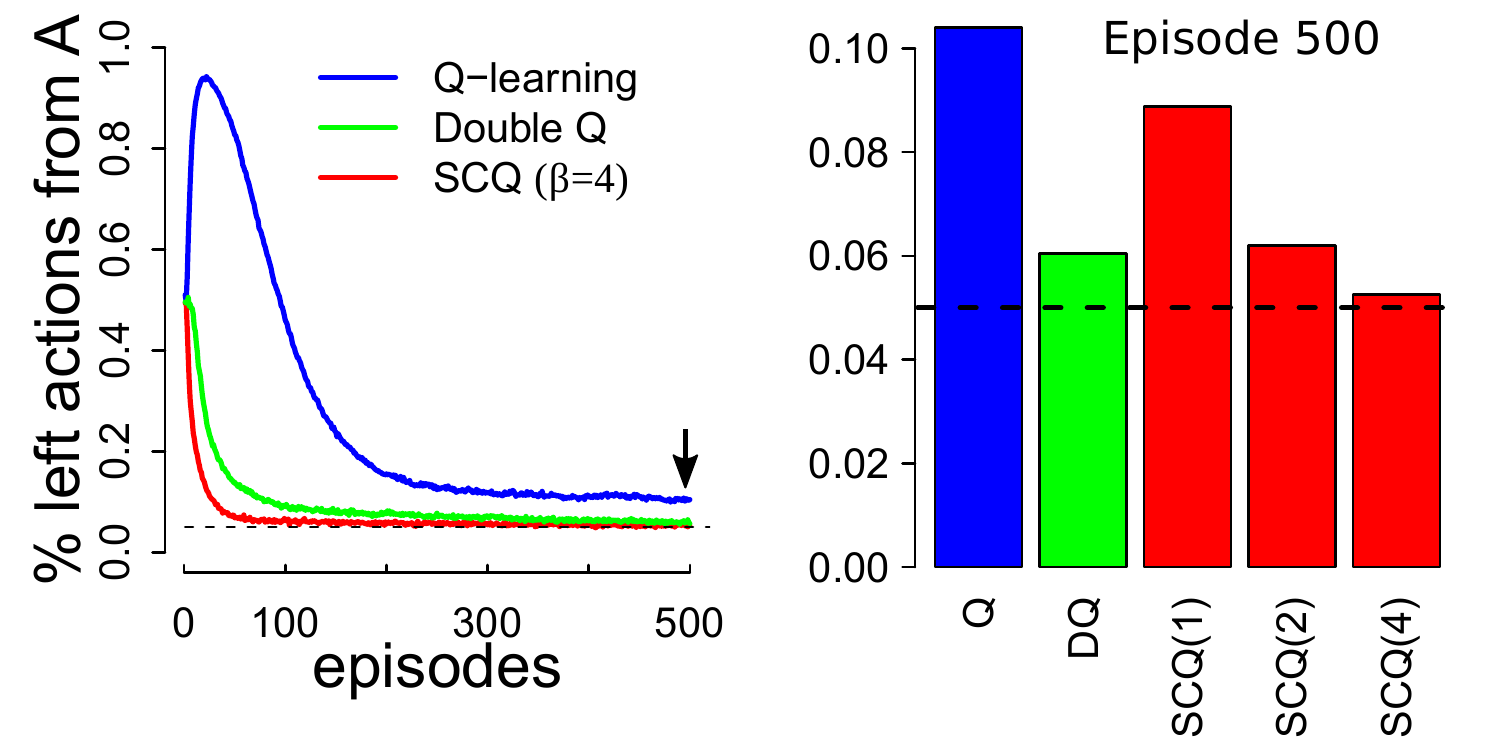}
    \caption{
       Maximization bias example. 
        \textbf{Left:} percent of left actions taken as a function of episode.   
        Parameter settings are $\epsilon=0.1$, $\alpha=0.1$, and $\gamma=1$. Initial action value estimates are zero. \textbf{Right:} percent of left actions from A, averaged over the last five episodes (arrows) for Q-learning, Double Q-learning and Self-correcting Q-learning with increasing $\beta$, which decreases bias.
        Results averaged over 10,000 runs. 
    } \label{fig:max}
\end{figure}

\section{Experiments}
\label{sec:task}

\begin{figure*}[htb]
    \centering
    \includegraphics[keepaspectratio,width=0.8\linewidth]{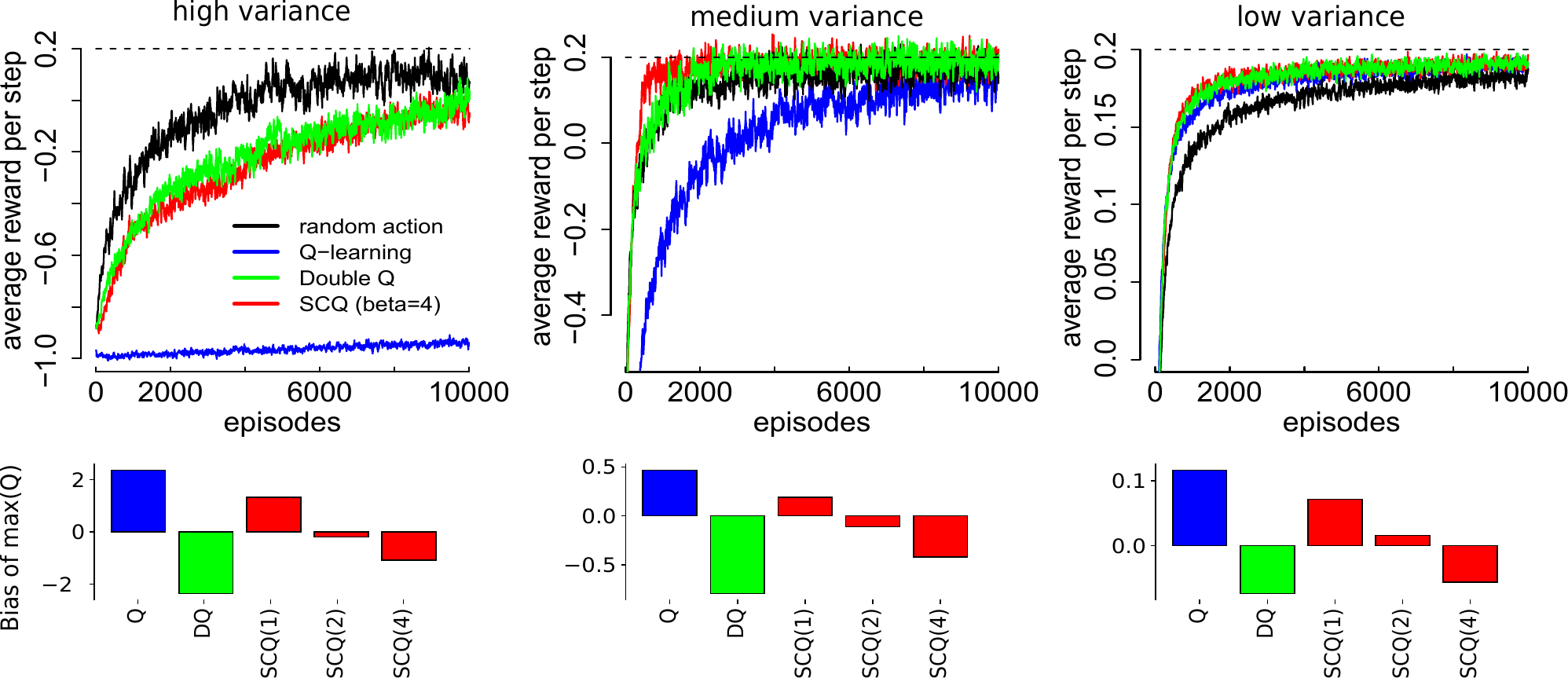}
    \caption{Grid-world task. Rewards of a non-terminating step are uniformly sampled between $(-12, 10)$ (high variance), $(-6, 4)$ (medium variance), and $(-2, 0)$ (low variance). 
    \textbf{Upper row:} average reward per time step. \textbf{Lower row:} bias of the maximal action values of the final episode in the starting state, where  $\text{SCQ}(c)$, c=1,2,4, denotes $\text{SCQ}(\beta=c)$. 
    Average rewards are accumulated over 500 rounds and averaged over 500 runs. 
    }
    \label{fig:grid}
\end{figure*}

We compare in simulations the performance of several algorithms: \emph{Q-learning}, \emph{Double Q-learning}, and our \emph{Self-correcting Q-learning} (denoted as \emph{SCQ} in the figures), with $\beta=1,2,4$.
Note that Self-correcting Q-learning can be applied to debias any variant of Q-learning, but for simplicity in the empirical studies we only apply our method to Q-learning and focus on the comparison of the resulting self-correcting algorithm with Q-learning and Double Q-learning.
We consider three simple but representative tasks.
First, the maximization bias example shows that Self-correcting Q-learning can remove more bias than Double Q-learning. 
Second, the grid-world task serves to establish the advantage of Self-correcting Q-learning over Double Q-learning in terms of overall performance, and shows its robustness towards high reward variability.
Lastly, the cliff-walking task shows that Self-correcting Q-learning displays faster convergence than Q-learning when rewards are fixed.

\paragraph{Maximization Bias Example.}

This simple episodic MDP task has two non-terminal states A and B (see details in Figure 6.5 of \cite{Sutton:2018}).
The agent always starts in A with a choice between two actions: left and right. 
The right action transitions immediately to the terminal state with a reward of zero. 
The left action transitions to B with a reward of zero. 
From B, the agent has several possible actions, all of which immediately transition to the termination with a reward drawn from a Gaussian $\mathcal{N}(-0.1, 1)$. 
As a result, the expected reward for any trajectory starting with left is $-0.1$ and that for going right is 0. 
In this settings, the optimal policy is to choose action left 5\% from A.

Fig.\ \ref{fig:max} shows that Q-learning initially learns to take the left action much more often than the right action, and asymptotes to taking it about 5\% more often than optimal, a symptom of the maximization bias.
Double Q-learning has a smaller bias, but still takes the left action about 1\% more often than optimal.
Self-correcting Q-learning performs between Double Q-learning and Q-learning when $\beta=1$, performs similarly to Double Q-learning when $\beta=2$, and almost completely removes the bias when $\beta=4$, demonstrating almost complete mitigation of the maximization bias.

\begin{figure*}[htb]
\centering
\includegraphics[keepaspectratio,width=0.78\linewidth]{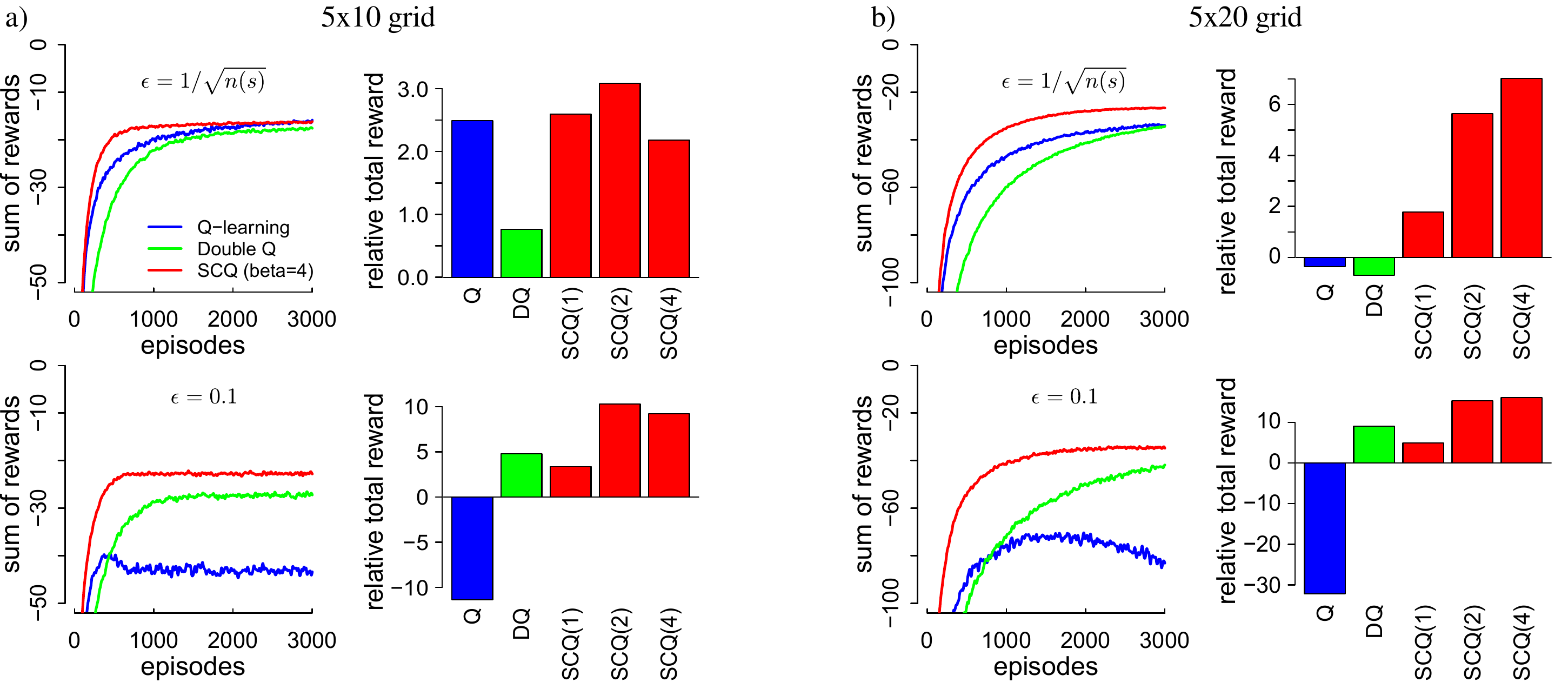}
\caption{Cliff-walking task. 
Two environments are considered: 
a $5\times10$ arena (a), and a $5\times20$ arena (b).
\textbf{Left} in each panel: cumulative rewards. \textbf{Right} in each panel: relative total reward (i.e.\ the difference in total reward from random action) at the final episode.
$\text{SCQ}(c)$, c=1,2,4, denotes $\text{SCQ}(\beta=c)$. 
Each panel shows two $\epsilon$-greedy exploration strategies: $\epsilon=1/\sqrt{n(s)}$ (upper panel), and $\epsilon=0.1$ (lower panel), where $n(s)$ is the number of times state $s$ has been visited.  
The step sizes are chosen: $\alpha_n(s,a)=0.1(100+1)/(100+n(s,a))$, where $n(s,a)$ is the number of updates of each state-action.
Data points are averaged over 500 runs, then are smoothed for clarity.
}
\label{fig:walk}
\end{figure*}

\paragraph{Grid-world task.} 

We follow the $3\times3$ grid-world MDP in \cite{Hasselt:2010}, but study different degrees of reward randomness. 
The starting state is in the southwest position of the grid-world and the goal state is in the northeast. 
Each time the agent selects an action that puts it off the grid, the agent stays in the same state.
In this task each state has 4 actions, i.e.\ the 4 directions the agent can go.
In each non-terminating step, the agent receives a random reward uniformly sampled from an interval $(L, U)$, which we choose to modulate the degree of randomness.
We consider three intervals:
$(-12, 10)$ (high variability), $(-6, 4)$ (medium variability), and $(-2, 0)$ (low variability).
Note that all 3 settings have the same optimal values.
In the goal state any action yields $+5$ and the episode ends. 
The optimal policy ends one episode after five actions, so that the optimal average reward per step is $+0.2$. 
Exploration is encouraged with $\epsilon$-greedy action selection with $\epsilon(s)= 1/\sqrt{n(s)}$, where $n(s)$ is the number of times state $s$ has been visited. 
The learning rate is set to $\alpha(s, a) = 1/n(s, a)$, where $n(s,a)$ is the number of updates of each state-action. 
For Double Q-learning, both value functions are updated for each state-action.

The upper panels of Fig.\ \ref{fig:grid} show the average reward per time step, while the lower panels show the deviation of the maximal action value from optimal (the bias) after 10,000 episodes.
First, Self-correcting Q-learning with $\beta=2$ gets very close to the optimal value of the best action in the starting state (the value is about 0.36). 
Self-correcting Q-learning with $\beta=1$ still displays overestimation which, however, is much smaller than Q-learning.
Self-correcting Q-learning with $\beta=4$ shows a small underestimation which, however, is much smaller than that of Double Q-learning.
These observations are consistent for different degrees of reward randomness.
This supports the idea that Self-correcting Q-learning can balance the overestimation of Q-learning and the underestimation of Double Q-learning.
Second, Self-correcting Q-learning performs as well as Double Q-learning in terms of average rewards per step. 
Comparing performance under high, medium, and low reward variability, we observe the following. 
When variability is high, Self-correcting Q-learning performs as well as Double Q-learning, 
and Q-learning the worst.
When variability is moderate, Self-correcting Q-learning can performs a little better than Double Q-learning, and Q-learning still performs the worst.
When variability is low, that is, the effect of the maximization bias is low, the difference between all methods becomes small. 
Third, 
the performance of Self-correcting Q-learning is robust to changes in $\beta$. 
For moderate reward variance, $\beta=2-4$ is a reasonable choice.
As reward variability increases, larger $\beta$ may be better.

\paragraph{Cliff-walking Task.}

Fig.\ \ref{fig:walk} shows the results on the cliff-walking task Example 6.6 in \cite{Sutton:2018}, a standard undiscounted episodic task with start and goal states, and four movement actions: up, down, right, and left.  
Reward is $-1$ on all transitions except those into the ``Cliff'' region (bottom row except for the start and goal states).  
If the agent steps into this region, she gets a reward of $-100$ and is instantly sent back to the start.
We vary the environment size by considering a $5\times 10$ and a $5\times 20$ grid.

We measure performance as cumulative reward during episodes, and report average values for 500 runs in Fig.\ \ref{fig:walk}. 
We investigate two $\epsilon$-greedy exploration strategies: $\epsilon=1/\sqrt{n(s)}$ (annealed) 
and $\epsilon=0.1$ (fixed). 
With rewards of zero variance, Double Q-learning shows no advantage over Q-learning,
while Self-correcting Q-learning learns the values of the optimal policy and performs better, with an even larger advantage as the state space increases.
This is consistent for both exploration strategies. 
Conversely, Double Q-learning is much worse than Q-learning when exploration is annealed. 
This experiments indicate that Double Q-learning may work badly when rewards have zero or low variances.
This might be due to the fact that Double Q-learning successively updates two Q-functions in a stochastic way.
We also compare the performance of Self-correcting Q-learning under various $\beta$ values. 
Self-correcting Q-learning performs stably over different $\beta$ values, and works well for $\beta$ between 2 and 4. 
Finally,
larger $\beta$ results in better performance for environments with larger number of states.

\paragraph{DQN Experiments in the Atari 2600 domain.}

\begin{figure*}[htb]
    \centering
	\includegraphics[width=0.8\linewidth]{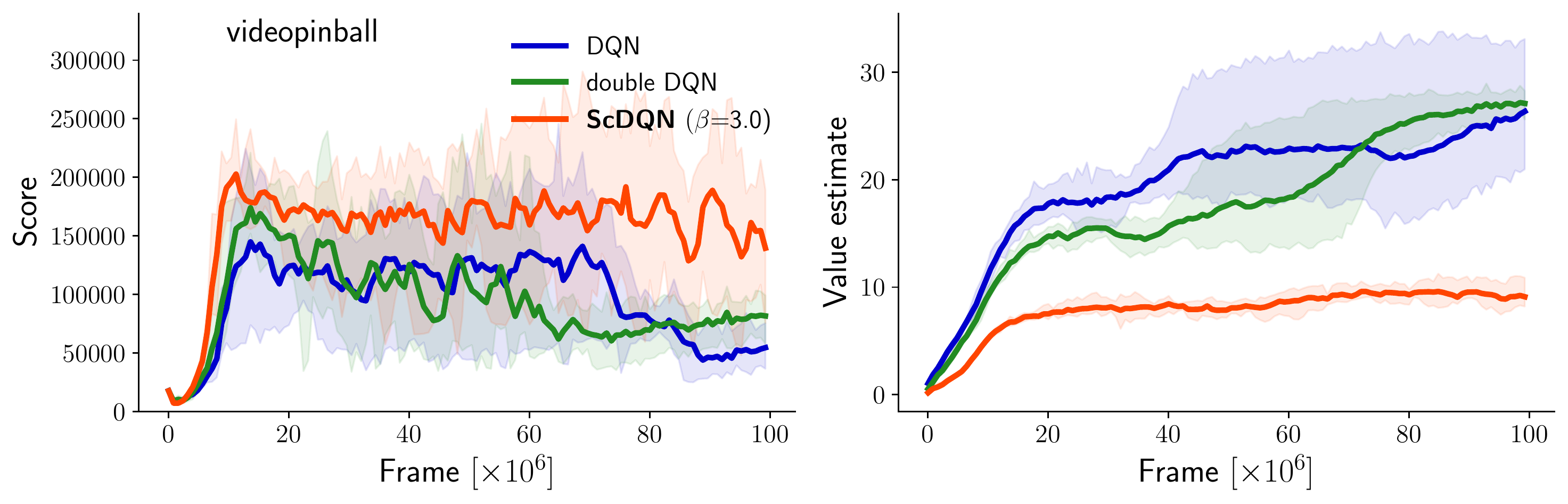}
	\caption{Atari 2600 videopinball game. Results are obtained by running DQN, Double DQN, and ScDQN with 6 different random seeds and hyper-parameters from \cite{Minh:15}. Lines show average over runs and the shaded areas indicate the min and max values of average performance in intervals of 80000 frames over the 6 runs. \textbf{Left plot}: ScDQN quickly reaches a higher score than DQN and Double DQN. DQN and Double DQN catch up, but their scores are unstable and drop when they excessively overestimate values measured in terms of the estimated value of the selected action (\textbf{right plot}).}
	\label{fig:videopinball}
\end{figure*}

\begin{figure*}[htb]
\centering
    \includegraphics[width=0.8\linewidth]{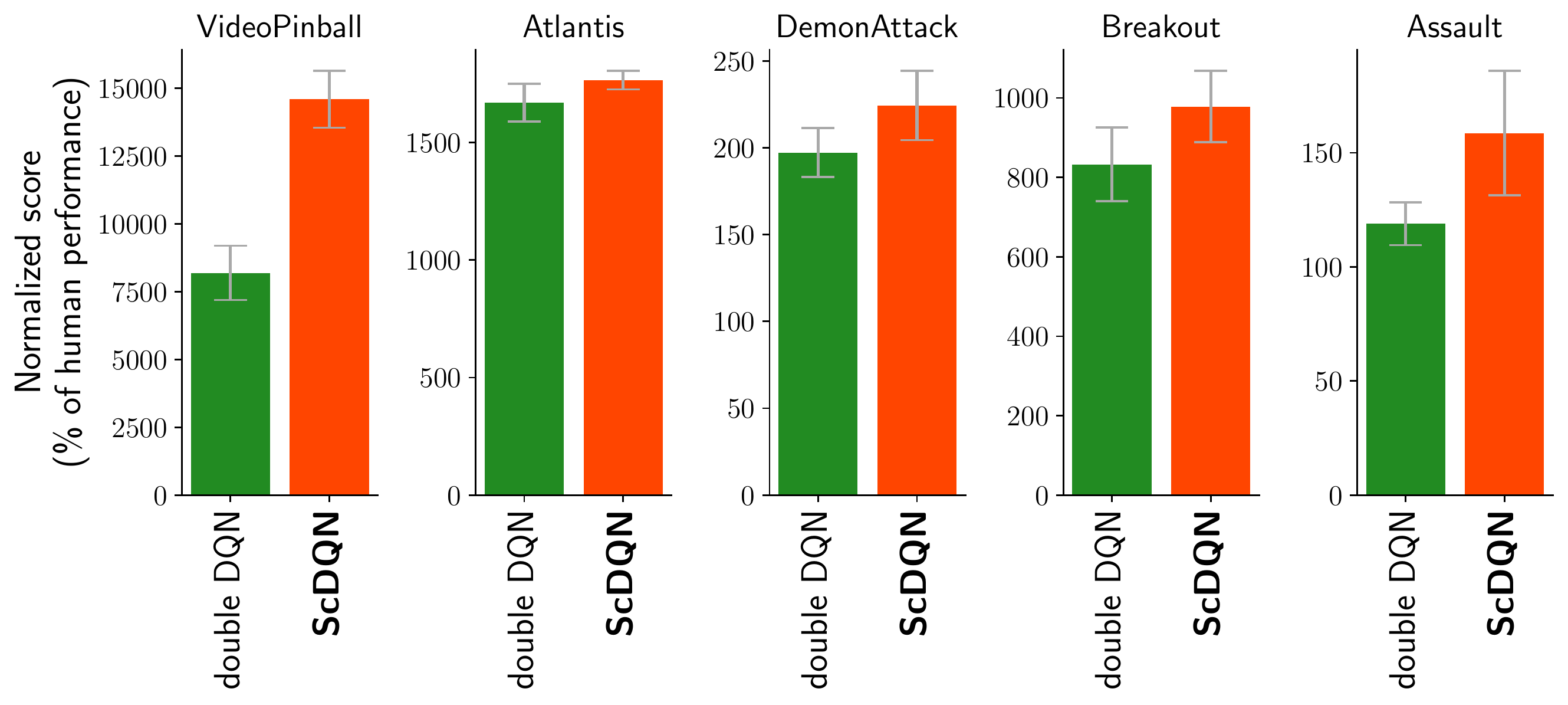}
    \caption{ScDQN rivals Double DQN on Atari 2600 games of DQN networks. Each bar indicates average score over 100 episodes and 6 random seeds (normalized to human performance \citep{Minh:15}), error bars are SEM over 6 random seeds. For ScDQN $\beta$ is set to $\beta$=3.0 for all games. Training is done for 20M steps  (see Appendix C), unless performance does not seem to stabilize (which was the case for \emph{VideoPinball} and \emph{Breakout}), in which case training is extended to 100M steps.}\label{fig:dqn_eval}
\end{figure*}

We study the Self-correcting Deep Q Network algorithm (ScDQN), i.e.\ the Neural Network version of Self-correcting Q-learning, in five representative tasks of the Atari 2600 domain: \emph{VideoPinball}, \emph{Atlantis}, \emph{DemonAttack}, \emph{Breakout} and \emph{Assault}. These games were chosen because they are the five Atari 2600 games for which Double DQN performes the best compared to human players \citep{HGS:16}.
We compare the performance of ScDQN against Double DQN \citep{HGS:16}.
We trained the same architecture presented in \cite{Minh:15} as implemented in Vel (0.4 candidate version, \cite{vel2018}).
Each experiment is run 6 times with different random seeds (as in e.g.\  \citep{HGS:16}), and we report average performance and variability across 6 independently trained networks.
In all experiments the network explores through $\epsilon$-greedy action selection.
The parameter $\epsilon$ starts off a $1.0$ and is linearly decreased to $0.1$ over 1M simulation steps, while $\beta$ is kept constant throughout.

We observed that our algorithm ScDQN has a faster and more stable convergence to a high reward solution than DQN and double DQN in the tested tasks, as we show in Fig.\ \ref{fig:videopinball} the representative example of the \emph{VideoPinball} task.
Interestingly, ScDQN also tends to display lower value estimates than Double DQN. We hypothesize that this might mainly be due to Double DQN being able to mitigate the underestimation problem of vanilla Double Q-learning thanks to the target network.
Fig.\ \ref{fig:dqn_eval} shows the final evaluation of DQN networks trained with Double DQN and ScDQN, and in all of the shown tasks ScDQN is at least as good as Double DQN.

\section{Conclusion and Discussion}
\label{sec:end}

We have presented a novel algorithm, \emph{Self-correcting Q-learning}, to solve the maximization bias of Q-learning. 
This method balances the the overestimation of Q-learning and the underestimation of Double Q-learning.
We demonstrated theoretical convergence guarantees for Self-correcting Q-learning, and showed that it can scales to large problems and continuous spaces just as Q-learning.

We studied and validated our method on several tasks, including a neural network implementation, ScDQN, which confirm that Self-correcting (deep) Q-learning reaches better performance than Double Q-learning, and converges faster than Q-learning when rewards variability is low.

One question left open is how to optimally set the new parameter $\beta$. Luckily, Self-correcting Q-learning does not seem sensitive to the choice of $\beta$.
In our experiments, $\beta=2-4$ is a good range.
Empirically, for larger state spaces and reward variability, larger $\beta$ tend to work better.
Future investigations on the effect of $\beta$ would still be welcome. 

Further interesting future research directions are:
(1) formal understanding of the advantage of Self-correcting Q-learning over Q-learning, as for instance in the cliff-walking task with fixed rewards.
(2) Besides Q-learning, the maximization bias exists in other reinforcement learning algorithms, such as the actor-critic algorithm.  \cite{FHM:18} applied the idea of Double Q-learning to the actor-critic algorithm.
Our self-correcting estimator could potentially be applied in a similar way.

\section*{Acknowledgements}
For this work, RZ was partially supported by the National Natural Science Foundation of China under grants 11871459 and 71532013. 

\bibliography{refs}

\appendix
\onecolumn

\section*{Appendix A}
\label{appendix}


\begin{proof}[\textbf{Lemma \ref{lemma-compare}}]

Denoting $\tau=\frac{\beta-1}{\beta}$, from Eqn.\ (\ref{Qe}) it follows that 
\begin{equation}\label{Qe2}
    Q^{B_0}(a_i)=\tau~Q^{B_1}(a_i)+(1-\tau)Q^{B_{\tau}}(a_i).
\end{equation} 

For simplicity, we assume that $a_{\tau}^*=\arg\max_{a_i} Q^{B_\tau}(a_i)$ is unique (otherwise pick a maximum uniformly at random). 
From Eqn.~\eqref{Qe2},
we have that 
\begin{align}\label{expectation}
\text{E}[Q^{B_0}(a_{\tau}^*)]
&=\tau E[Q^{B_1}(a_{\tau}^*)]+(1-\tau)\text{E}[Q^{B_{\tau}}(a_{\tau}^*)]\notag\\
&=\tau E[Q^{B_1}(a_{\tau}^*)]+(1-\tau)\text{E}[\max_i Q^{B_{\tau}}(a_i)],
\end{align}
where the last step is from the definition of $a_{\tau}^*$. 
This implies that $\text{E}[Q^{B_1}(a_{\tau}^*)]$ takes value between  $E[Q^{B}(a_{\tau}^*)]$ and $\text{E}[\max_i Q^{B_{\tau}}(a_i)]$ since $0\leq \tau <1$.

Noticing that $a_{\tau}^*$ is chosen by maximizing $Q^{B_{\tau}}(a_i)$ and that $\{Q^{B_{\tau}}(a_i)\}_{i=1}^M$ is an unbiased set of estimators of $\{\text{E}[Q(a_i)]\}_{i=1}^M$, 
$\max_i Q^{B_{\tau}}(a_i)$ is the single estimator from the unbiased set $\{Q^{B_{\tau}}(a_i)\}_{i=1}^M$.
On the other hand, because $\{Q^{B_{\tau}}(a_i)\}_{i=1}^M$ is independent from $\{Q^{B_1}(a_i)\}_{i=1}^M$, then $Q^{B_1}(a_{\tau}^*)$ is a double estimator from two unbiased sets $\{Q^{B_{\tau}}(a_i)\}_{i=1}^M$ and $\{Q^{B_1}(a_i)\}_{i=1}^M$.

Based on the properties of the single estimator and the double estimator,  we have that 
\begin{align*}
&\text{E}[\max_i Q^{B_1}(a_{\tau})]\leq \max_i\text{E}[Q(a_i)] \text{, and }\notag\\
&\text{E}[\max_i Q^{B_{\tau}}(a_i)]\geq \max_i\text{E}[Q(a_i)].
\end{align*}
Combining these inequalities with Eq.~\eqref{expectation} for $\text{E}[Q^{B_0}(a_{\tau}^*)]$ we can thus conclude that there exists a $\tau$ (with a corresponding $\beta$) such that $\text{E}[Q^{B_0}(a_{\tau}^*)]=\max_i\text{E}[Q(a_i)]$.

\end{proof}

\begin{lemma}\label{lem-J94}
(Theorem 1 of \cite{JJS:94}) Consider a random iterative process $(\alpha_n, \Delta_n, F_n)$ such that, 
for $x\in X$, 
$$\Delta_{n+1}(x)=[1-\alpha_n(x)]\Delta_{n}(x)+\alpha_n(x)F_n(x).$$
Let $P_n=\{X_0,\cdots,X_n;  \alpha_0, \cdots, \alpha_{n} ;F_0,\cdots,F_{n-1}\}$. 
Assume the following assumptions are satisfied: 
(a1) $X$ is a finite set; 
(a2) $\alpha_n(x)$ are assumed to be nonnegative and mutually independent given $P_n$ such that  $\sum_{n=1}^{\infty}\alpha_n(x)=\infty$ and $\sum_{n=1}^{\infty}\alpha_n(x)^2<\infty$; 
(a3) $\|\text{E}[F_n(x)|P_n]\|_{\text{max}}\leq \zeta\|\Delta_n\|_{\text{max}}$ where $\zeta\in(0,1)$, and $\|\cdot\|_{\text{max}}$ refers to a maximum norm; 
(a4) $\text{Var}[F_n(x)|P_n]\leq c(1+\|\Delta_n\|_{\text{max}})^2$, where $c$ is some constant. Then 
$\Delta_{n}(x)$ converges to zero with probability one.
\end{lemma}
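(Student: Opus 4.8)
The plan is to treat this as a statement in stochastic approximation theory and prove it by combining a scalar noise-convergence lemma with the contraction property (a3), through a geometric successive-refinement (``sandwich'') argument. First I would split the update term into its conditional mean and a martingale-difference noise: write $F_n(x)=g_n(x)+w_n(x)$ with $g_n(x)=\text{E}[F_n(x)\mid P_n]$ and $w_n(x)=F_n(x)-g_n(x)$, so that $\text{E}[w_n(x)\mid P_n]=0$ and, by (a4), $\text{Var}[w_n(x)\mid P_n]\leq c(1+\|\Delta_n\|_{\text{max}})^2$. Assumption (a3) then reads $|g_n(x)|\leq\zeta\|\Delta_n\|_{\text{max}}$ with $\zeta\in(0,1)$: the deterministic drift is a $\zeta$-contraction of the current error toward $0$, while the noise is conditionally centered. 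Substituting into the recursion yields $\Delta_{n+1}(x)=[1-\alpha_n(x)]\Delta_n(x)+\alpha_n(x)\,g_n(x)+\alpha_n(x)\,w_n(x)$, which I would analyze as a contraction driven by vanishing noise.

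The scalar building block I would establish first is the following: for a single coordinate, the noise-only recursion $W_{n+1}=[1-\alpha_n]W_n+\alpha_n w_n$, with $w_n$ conditionally mean-zero and conditional variance bounded by a constant, converges to $0$ almost surely under the Robbins--Monro conditions (a2). This is a standard consequence of (super)martingale convergence (e.g.\ a Robbins--Siegmund argument): $\sum_n\alpha_n^2<\infty$ controls the accumulated noise while $\sum_n\alpha_n=\infty$ forces the recursion to forget its initial condition. The complication is that in our setting the variance bound from (a4) involves $\|\Delta_n\|_{\text{max}}$, which is itself random and not a priori bounded, so this lemma cannot be applied verbatim.

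With the scalar lemma in hand, the core argument is a geometric successive refinement. Suppose we know $\limsup_n\|\Delta_n\|_{\text{max}}\leq D$ almost surely for some bound $D$. For each coordinate $x$, the contraction (a3) lets me sandwich $\Delta_n(x)$ between the solutions of the envelope recursions $Y_{n+1}(x)=[1-\alpha_n(x)]Y_n(x)+\alpha_n(x)(\zeta D+w_n(x))$ and its negative counterpart, using $|g_n(x)|\leq\zeta D$ and $\alpha_n(x)\in[0,1]$ (valid eventually, since $\sum_n\alpha_n^2<\infty$ forces $\alpha_n\to 0$). Since the constant drift term $\zeta D$ pulls $Y_n$ toward $\zeta D$ while its noise part tends to $0$ by the scalar lemma, I obtain $\limsup_n\|\Delta_n\|_{\text{max}}\leq\zeta D$, up to an arbitrarily small $\epsilon$ that can be absorbed. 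Iterating from the bound $D_{k+1}=\zeta D_k$ gives $D_k\to 0$ because $\zeta<1$, and hence $\Delta_n(x)\to 0$ almost surely for every $x$, the finiteness of $X$ in (a1) making the ``almost surely'' uniform over coordinates.

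The main obstacle is exactly the coupling between the noise variance and $\|\Delta_n\|_{\text{max}}$ flagged above, which makes both the initial boundedness $\sup_n\|\Delta_n\|_{\text{max}}<\infty$ and the application of the scalar lemma non-immediate. I would resolve this with a rescaling/stopping argument: define a relative process that normalizes by the current running bound on $\|\Delta_n\|_{\text{max}}$ so that the effective noise has uniformly bounded conditional variance, apply the scalar convergence lemma to the normalized process, and transfer the conclusion back to $\Delta_n$. Establishing this boundedness step rigorously, and verifying that the $\epsilon$-slack introduced at each refinement stage does not accumulate, are the technical heart of the proof; everything else is bookkeeping around the contraction constant $\zeta$ and the Robbins--Monro conditions (a2).
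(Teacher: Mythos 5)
The paper contains no proof of this lemma: it is imported verbatim as Theorem~1 of the cited reference \cite{JJS:94}, and the appendix only invokes it as a black box when proving Theorem~\ref{them-SCQ}. So there is no in-paper argument to compare against; the right benchmark is the original proof of Jaakkola, Jordan and Singh, and measured against that your sketch is essentially a faithful reconstruction of it rather than a new route. They too split $F_n(x)$ into its conditional mean $g_n(x)=\text{E}[F_n(x)\mid P_n]$ and a martingale-difference remainder, prove a separate lemma for the pure-noise recursion $W_{n+1}=(1-\alpha_n)W_n+\alpha_n w_n$ under the Robbins--Monro conditions (a2), handle the $(1+\|\Delta_n\|_{\text{max}})^2$ variance coupling in (a4) by a rescaling argument that exploits the scale-invariance of the dynamics to reduce to uniformly bounded noise, and finish with exactly your geometric trapping scheme, shrinking the bound by a factor $\zeta'\in(\zeta,1)$ per stage so the $\epsilon$-slack is absorbed. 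You correctly identified the two genuinely delicate points --- almost-sure boundedness of $\|\Delta_n\|_{\text{max}}$ before the scalar lemma applies, and non-accumulation of the per-stage slack --- which is precisely where the technical work in the reference lies. Two details to nail down in a full write-up: the sandwich comparison requires $1-\alpha_n(x)\geq 0$, which assumption (a2) only guarantees eventually (since $\sum_n \alpha_n(x)^2<\infty$ forces $\alpha_n(x)\to 0$), so each refinement stage must be restarted at a random stopping time with the envelope process initialized to dominate $|\Delta_n(x)|$ at that time; and the slack at stage $k$ should be taken proportional to the current bound, e.g.\ $\epsilon_k\leq(\zeta'-\zeta)D_k$, so that $D_{k+1}\leq \zeta' D_k$ remains a genuine geometric recursion. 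With those in place your plan goes through and is, up to presentation, the standard proof of this stochastic-approximation result (also found in Bertsekas--Tsitsiklis style treatments).
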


\begin{proof}[\textbf{Sketch of the proof of Theorem \ref{them-SCQ}}]

We sketch how to apply the steps proving Lemma \ref{lem-J94} in \cite{JJS:94} to prove Theorem \ref{them-SCQ} without going into full
technical detail.
Denote $V^*(s)=\max_aQ^*(s,a)$, and $R(s,a)=\sum_{s^{\prime}}P_{ss^{\prime}}(a)R_{ss^{\prime}}(a)$, 
where $P_{ss^{\prime}}(a)$ is the probability of ending up in state $s^{\prime}$ after performing action $a$ in state $s$, 
and $R_{ss^{\prime}}(a)$ is the corresponding reward $r$. 
The \emph{Bellman equations} are rewritten as follows:
\begin{equation}\label{Bellman-v2}
Q^*(s,a)=R(s,a)+\gamma\sum\limits_{s^{\prime}}P_{ss^{\prime}}(a)V^*(s^{\prime}), \forall s,a.
\end{equation}

We need to prove that $Q_{n+1}(s,a)$ in Q-learning goes to $Q^*(s,a)$ in Eqn. (\ref{Bellman-v2}) with probability one. 
In order to prove the convergence, we apply Lemma \ref{lem-J94} above into $Q_n(s,a)$.  

Let $V_n(s^{\prime})=Q_n\left(s^{\prime}, \hat{a}^{\beta}\right)$, where 
By identifying $X=S\times A$, $\Delta_n(s,a)=Q_{n}(s,a)-Q^*(s,a)$, $\alpha_n(x)=\alpha_n(s,a)$, 
$P_n=\{Q_0, s_0, a_0, \alpha_0, s_1, a_1, \alpha_1, \cdots, s_n, a_n, \alpha_n\}$
and $F_n(s,a)=r+\gamma Q_n\left(s^{\prime}, \hat{a}^{\beta}\right)-Q^*(s,a)$, 
the iteration of Self-correcting Q-learning is rewritten as follows:
\begin{equation}\label{ScQ-update}
\Delta_{n+1}(s,a)=(1-\alpha_n(s,a))\Delta_{n}(s,a)+\alpha_n(s,a)F_n(s,a). 
\end{equation}
We have that if the conditions (a1-a4) of Lemma \ref{lem-J94} are verified, then $\Delta_n(s,a)$ goes to zero with probability 1. 
We shall verify those conditions.  It is straightforward to show that the first two conditions (a1) and (a2) of Lemma \ref{lem-J94} hold from C1 and C2 in Theorem \ref{them-SCQ}.

Denoting $\Delta_n^{\beta}(s,a)=Q_{n}^{\beta}(s,a)-Q^*(s,a)$, we rewrite $\Delta_n(s,a)$ as 
\begin{equation}\label{DD}
\Delta_n(s,a)=\Delta_n^{\beta}(s,a)+\beta\Delta_n^{Q}(s,a),
\end{equation}
where $\Delta_n^{Q}(s,a)=Q_{n}(s,a)-Q_{n-1}(s,a)$. 

By defining $e_n(s,a)=F_n(s,a)-\text{E}[F_n^{\beta}(s,a)|P_n]$, 
we decompose the above iterative process of Self-correcting Q-learning into two parallel processes which are given by
\begin{align}
\Delta_{n+1}^{\beta}(s,a)&=(1-\alpha_n(s,a))\Delta_n^{\beta}(s,a)+\alpha_n(s,a)\text{E}[F_n^{\beta}(s,a)|P_n]; \label{pair1}\\
\Delta_{n+1}^{Q}(s,a)&=(1-\alpha_n(s,a))\Delta_n^{Q}(s,a)+\alpha_n(s,a)\beta^{-1}e_n(s,a).\label{pair2}
\end{align}

First we investigate Eqn.~(\ref{pair1}). 
Now we verify the conditions (a3) and (a4) on $F_n(s,a)$. 
\begin{align}\label{eqn:Fn}
F_n(s,a)=&r+\gamma Q_n\left(s^{\prime}, \hat{a}^{\beta}\right)-Q^*(s,a)\notag\\
=&r+\gamma \max_{a^{\prime}}Q_n^{\beta}(s^{\prime},a^{\prime})-Q^*(s,a)+\gamma\left[Q_n\left(s^{\prime}, \hat{a}^{\beta}\right)-Q_n^{\beta}\left(s^{\prime}, \hat{a}^{\beta}\right)\right]\notag\\
=&r+\gamma \max_{a^{\prime}}Q_n^{\beta}(s^{\prime},a^{\prime})-Q^*(s,a)+\gamma\beta\Delta_n^{Q}\left(s^{\prime}, \hat{a}^{\beta}\right).
\end{align}

From Eqn.~(\ref{Bellman-v2}), Eqn.~(\ref{eqn:Fn}) follows that  
\begin{align}\label{Fmax}
\max_a\left|\text{E}[F_n^{\beta}(s,a)|P_n]\right|
=&\gamma \max_a\left|\sum_{s^{\prime}\in S}P_{ss^{\prime}}(a)(V_n(s^{\prime})-V^{*}(s^{\prime}))\right|\notag\\
\leq&\gamma \max_a\sum_{s^{\prime}\in S}P_{ss^{\prime}}(a)\left[|\max_{a^{\prime}}Q_n^{\beta}(s^{\prime},a^{\prime})-\max_{a^{\prime}}Q^{*}(s^{\prime},a^{\prime})|
+\beta|\Delta_n^{Q}\left(s^{\prime}, \hat{a}^{\beta}\right)|\right]\notag\\
\leq&\gamma \max_a\sum_{s^{\prime}\in S}P_{ss^{\prime}}(a)\left[\max_{a^{\prime}}|Q_n^{\beta}(s^{\prime},a^{\prime})-Q^{*}(s^{\prime},a^{\prime})|
+\beta\max_{a^{\prime}}|\Delta_n^{Q}\left(s^{\prime}, a^{\prime}\right)|\right]\notag\\
\leq&\gamma \max_{s^{\prime},a^{\prime}}|\Delta_n^{\beta}(s^{\prime},a^{\prime})|+\beta\gamma\max_{s^{\prime},a^{\prime}}|\Delta_n^{Q}(s^{\prime},a^{\prime})|,
\end{align}
where the first step is from Eqn.~(\ref{Bellman-v2}), the second step is direct, 
the third step is from the fact that for $v, w\in \mathbb{R}^d$, 
$|\max v-\max w|\leq |\max |v-w|$, and 
the last step is direct and from the fact $\sum_{s^{\prime}\in S}P_{ss^{\prime}}(a)=1$.  
From Eqn.~(\ref{Fmax}), Eqn.~(\ref{pair1}) follows that 
\begin{align}\label{pair1-v2}
|\Delta_{n+1}^{\beta}(s,a)|&\leq (1-\alpha_n(s,a))|\Delta_n^{\beta}(s,a)|+\gamma\alpha_n(s,a)\max_{s,a}\left[|\Delta_n^{\beta}(s,a)|
+\beta|\Delta_n^{Q}(s,a)|\right].
\end{align}
For the term $e_n(s,a)$,  following \cite{JJS:94}, the variance of $e_n(s,a)$ given $P_n$ depends on $Q_n(s,u)$ at mostly linearly and $\text{Var}(r)$ is bounded from the condition C3. 
Thus, $\text{Var}(e_n(s,a)|P_n)\leq c(1+\max_{s,a}|\Delta_n(s,a)|)^2$, where $c$ is some constant. 

Note that the conditions in Lemma \ref{lem-J94} are verified from the conditions in Theorem \ref{them-SCQ}. 
Because the pair of Eqns.~(\ref{pair1-v2}) \& (\ref{pair2}) has the same form as the equation pair in proving Lemma \ref{lem-J94} in \cite{JJS:94}, 
we can directly apply the steps of proving Lemma \ref{lem-J94} in \cite{JJS:94} into Eqns.~(\ref{pair1-v2}) \& (\ref{pair2}), 
implying that both $\Delta_{n+1}^{\beta}(s,a)$ and $\Delta_{n+1}^{Q}(s,a)$ converges to 0 with probability 1. 
Therefore, the theorem is proved.

\end{proof}

\section*{Appendix B}

\subsection{Robustness of Self-correcting Deep Q-learning to variations in \texorpdfstring{$\beta$}{beta} parameter}

We empirically verify the robustness of Self-correcting Deep Q-learning (ScDQN) to variations of the $beta$ parameter for the first few games in the Atari 2600 benchmark.
Each panel in Figure \ref{fig:beta_robustness} shows the performance of ScDQN on one game after training for 20M steps with $\beta$ varied to be $2.0$, $3.0$, $4.0$.
Analogously to Figure \ref{fig:dqn_eval}, each bar indicates the average score over 100 episodes and 6 random seeds at the end of training.
The figure demonstrates that the performance of the trained ScDQN is relatively stable to the value of $\beta$.

\begin{figure*}[htb]
\centering
    \includegraphics[width=0.8\linewidth]{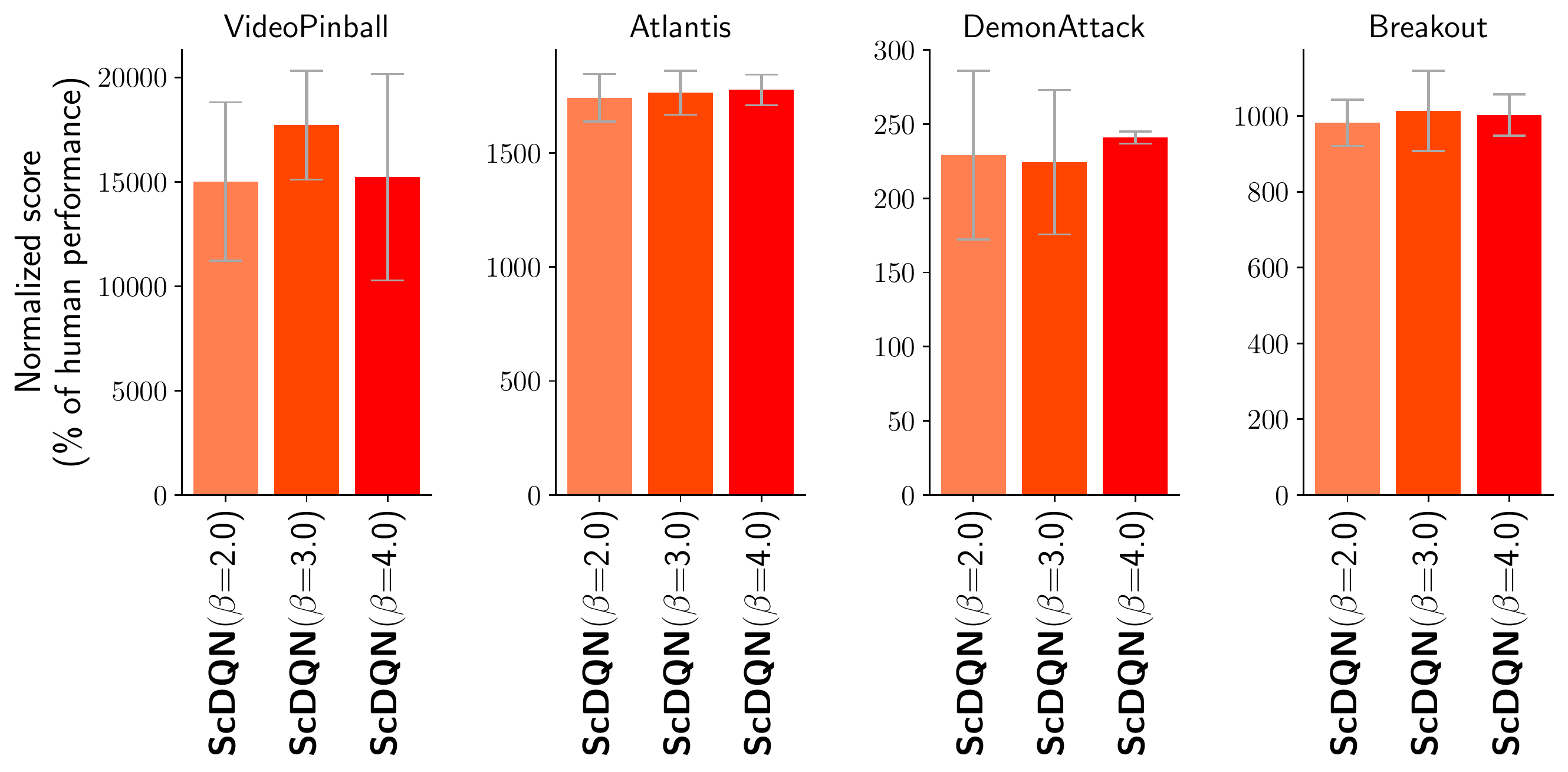}
    \caption{ScDQN is robust to variations of $\beta$ between $2.0$ and $4.0$ on Atari 2600 games. Each panel reports the performance of a DQN network trained for 20M steps with three values of $\beta$: 2.0, 3.0, 4.0. Each bar indicates average score over 100 episodes and 6 random seeds (normalized to human performance \citep{Minh:15}), error bars are SEM over 6 random seeds. The final performance of the trained DQN networks is relatively stable to variations of $\beta$ within the chosen range.}\label{fig:beta_robustness}
\end{figure*}

\clearpage

\section*{Appendix C}

\begin{figure*}[!h]
\centering
    \includegraphics[width=0.85\linewidth]{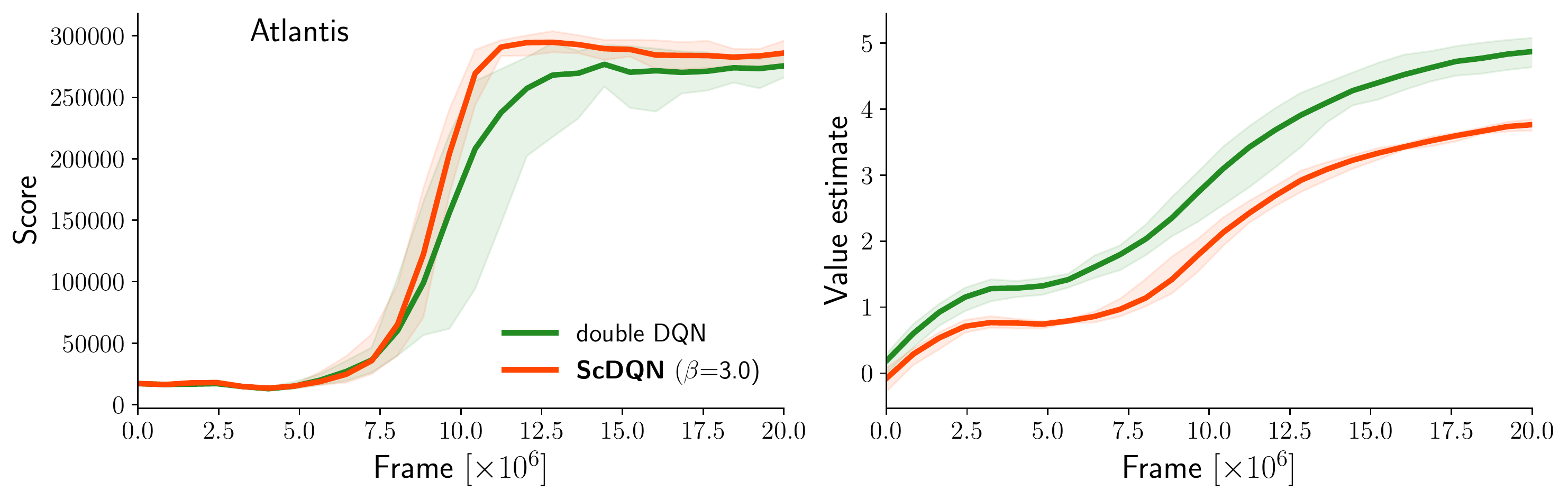}
    \includegraphics[width=0.85\linewidth]{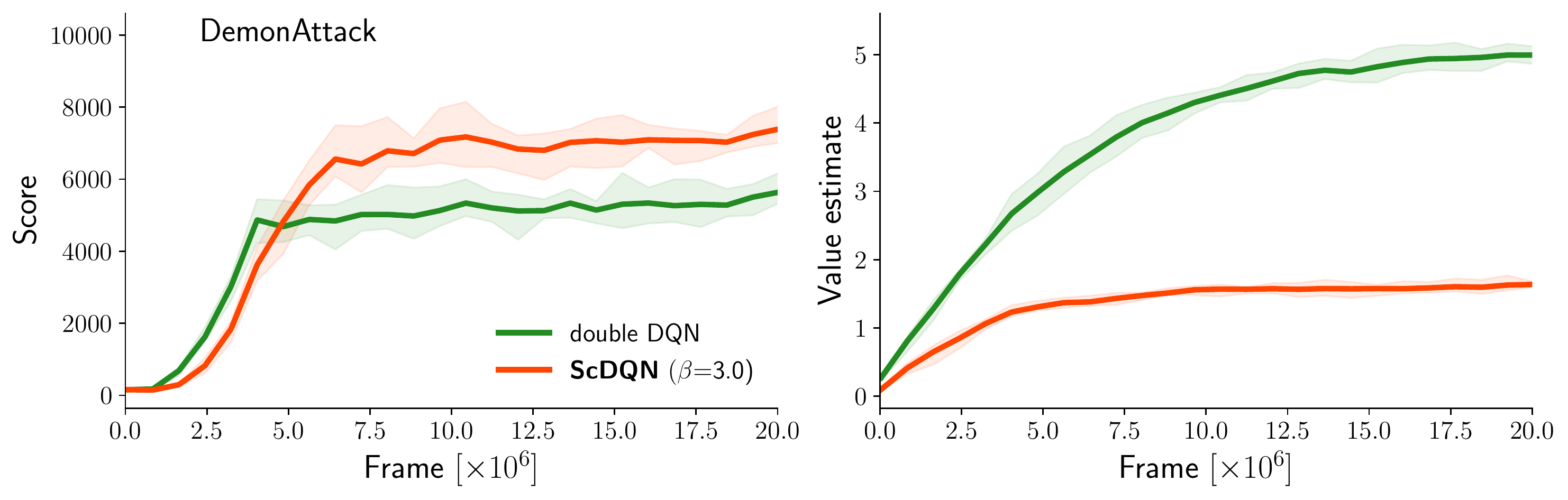}
    \includegraphics[width=0.85\linewidth]{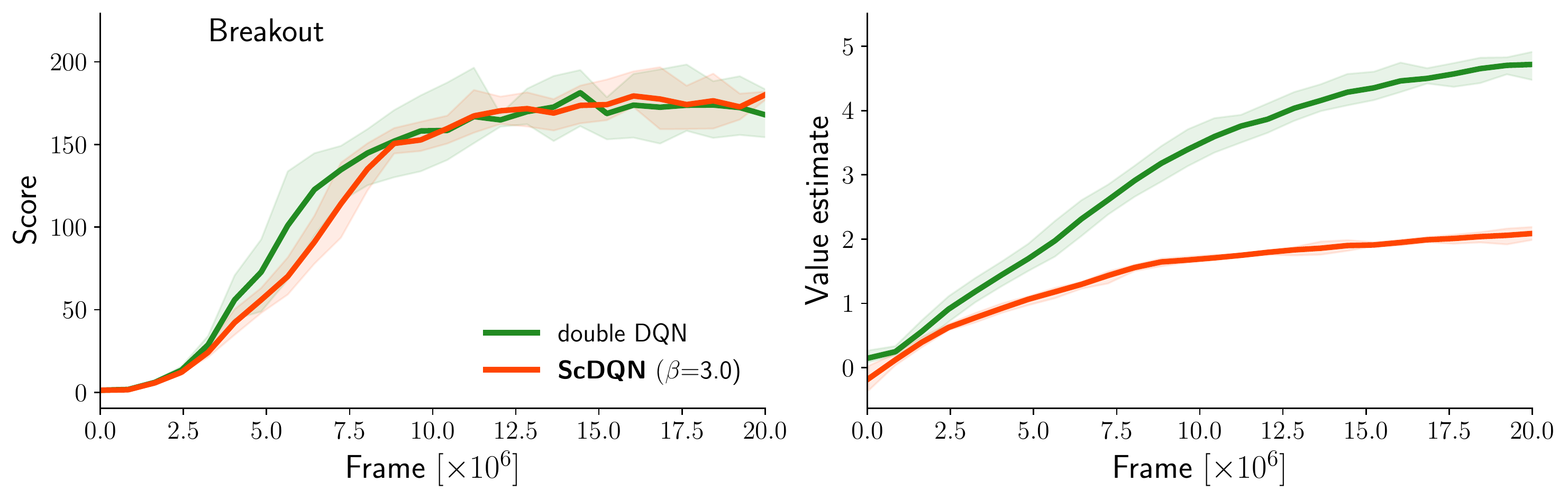}
    \includegraphics[width=0.85\linewidth]{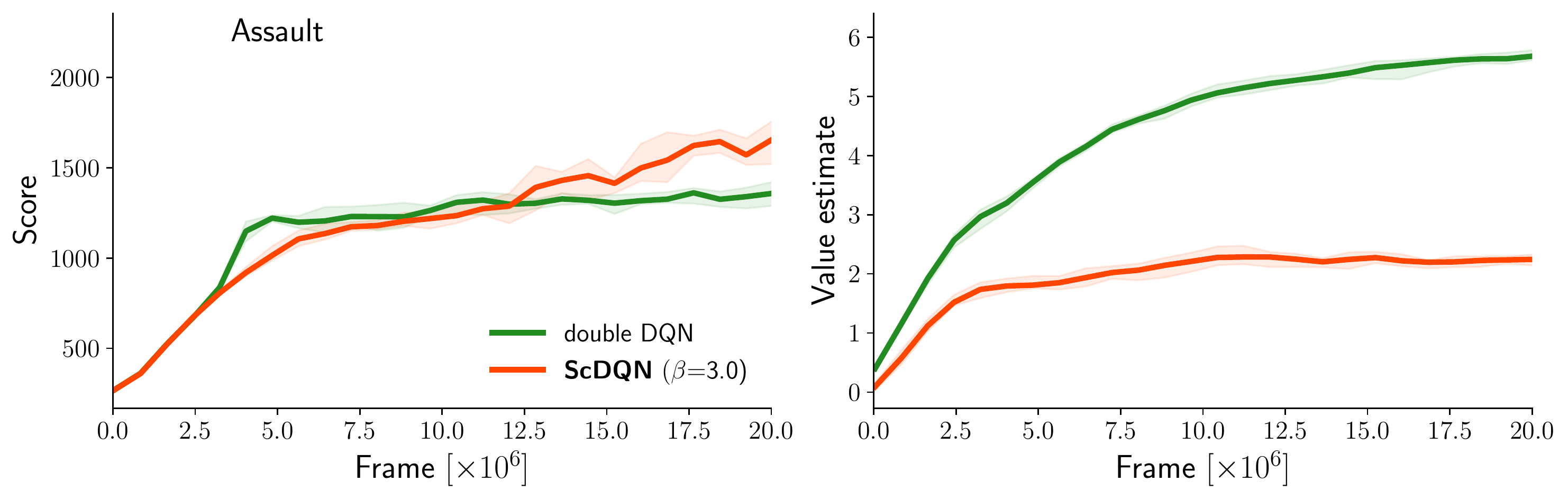}    
    \caption{Learning curves for Atari 2600 games evaluated in Fig.\ \ref{fig:dqn_eval} not shown in the main text of the paper. Results are obtained by running Double DQN, and ScDQN with 6 different random seeds and hyper-parameters from \cite{Minh:15}. Lines show average over runs and the shaded areas indicate the min and max values of average performance in intervals of 80000 frames over the 6 runs. \textbf{Left plots}: ScDQN consistently reaches scores that are comparable or higher than Double DQN. Double DQN considerably overestimates values measured as estimated value of the selected action compared to ScDQN (\textbf{right plots}).}\label{fig:add_learn}
\end{figure*}

\end{document}